% easychair.tex,v 3.5 2017/03/15

\documentclass[a4paper]{easychair}

\usepackage{doc}
\usepackage[export]{adjustbox}

% use this if you have a long article and want to create an index
% \usepackage{makeidx}

% In order to save space or manage large tables or figures in a
% landcape-like text, you can use the rotating and pdflscape
% packages. Uncomment the desired from the below.
%
% \usepackage{rotating}
% \usepackage{pdflscape}

% Some of our commands for this guide.
%

\usepackage[misc]{ifsym}
\usepackage{makecell}
\usepackage{longtable}
\usepackage{tabularx}
\usepackage{array}
\usepackage{wrapfig}
\usepackage{paralist}
\usepackage{xspace}
\usepackage{amsmath}
\usepackage{amssymb}
\usepackage{url}
\usepackage{algorithm}
\usepackage{algorithmic}
% For tables
\usepackage{booktabs}
\usepackage{siunitx}

\usepackage{cite}
\usepackage{hyperref}
\hypersetup{colorlinks,breaklinks,urlcolor=blue,linkcolor=blue,citecolor=blue}

\newtheorem{definition}{Definition}
\newtheorem{lemma}{Lemma}

\newcommand{\cegarnn}{\texttt{CEGAR-NN}}
\newcommand{\cegartwo}{\texttt{CEGARETTE}}
\newcommand{\cegarnntwo}{\texttt{CEGARETTE-NN}}

\newcommand{\sat}{\texttt{SAT}}
\newcommand{\unsat}{\texttt{UNSAT}}

\newcommand{\SBT}{\texttt{SBT}}
\newcommand{\IBP}{\texttt{IBP}}
\newcommand{\DeepPoly}{\texttt{DeepPoly}}

\newcommand{\abstractOp}{\texttt{abstract}}
\newcommand{\refineOp}{\texttt{refine}}
\newcommand{\queryAbstractOp}{\texttt{queryAbstract}}
\newcommand{\queryRefineOp}{\texttt{queryRefine}}

\newcommand{\relu}{\text{ReLU}\xspace{}}

\usepackage{tikz,ifthen,pgfplots}

\newcommand{\mysubsection}[1]{\medskip\noindent\textbf{#1}}

\usetikzlibrary{arrows,trees,backgrounds,automata,shapes,decorations,plotmarks,fit,calc,positioning,shadows,chains}
\tikzstyle{every pin edge}=[<-,shorten <=1pt]
\tikzstyle{neuron}=[circle,fill=black!25,minimum size=17pt,inner sep=0pt]
\tikzstyle{input neuron}=[neuron, fill=orange!50]
\tikzstyle{output neuron}=[neuron, fill=purple!50]
\tikzstyle{hidden neuron}=[neuron, fill=blue!50]
\tikzstyle{annot} = [text width=3.5em, text centered]
\tikzstyle{nnedge} = [-{stealth},shorten >=0.1cm, shorten <=0.05cm,line width=0.5pt,black]

\usetikzlibrary{angles,quotes}

%\makeindex

%% Front Matter
%%
% Regular title as in the article class.
%
\title{Tighter Abstract Queries in\\ Neural Network Verification%
%\thanks{Other people who contributed to this document include Maria Voronkov
%  (Imperial College and EasyChair) and Graham Gough (The University of
%  Manchester).}}
}

% Authors are joined by \and. Their affiliations are given by \inst, which indexes
% into the list defined using \institute
%
\author{
	Elazar Cohen\inst{1}\thanks{Equal Contribution}
\and
	Yizhak Yisrael Elboher\inst{1}$^\star$(\Letter)
\and
	Clark Barrett\inst{2}
\and
	Guy Katz\inst{1}
}

% Institutes for affiliations are also joined by \and,
\institute{
	The Hebrew University of Jerusalem, Jerusalem, Israel \\
	\{elazar.cohen1, yizhak.elboher, g.katz\}@mail.huji.ac.il
	\and
	Stanford University, Stanford, USA \\
	barrett@cs.stanford.edu 
}

%  \authorrunning{} has to be set for the shorter version of the authors' names;
% otherwise a warning will be rendered in the running heads. When processed by
% EasyChair, this command is mandatory: a document without \authorrunning
% will be rejected by EasyChair

\authorrunning{Cohen, Elboher, Barrett and Katz}

% \titlerunning{} has to be set to either the main title or its shorter
% version for the running heads. When processed by
% EasyChair, this command is mandatory: a document without \titlerunning
% will be rejected by EasyChair
\titlerunning{Tighter Abstract Queries in Neural Networks Verification}

\begin{document}

\maketitle

\begin{abstract}
  Neural networks have become critical components of reactive
  systems in various domains within computer science. Despite their
  excellent performance, using neural networks entails numerous risks
  that stem from our lack of ability to understand and reason about
  their behavior. Due to these risks, various formal methods have been
  proposed for verifying neural networks; but unfortunately, these typically
  struggle with scalability barriers. Recent attempts have demonstrated that
  abstraction-refinement approaches could play a significant
  role in mitigating these limitations; but these approaches can often
  produce networks that are so abstract, that they become unsuitable
  for verification.  To deal with this issue, we
  present \cegartwo{}, a novel verification mechanism where both the
  system and the property are abstracted and refined simultaneously.
  We observe that this approach allows us to produce abstract networks
  which are both small and sufficiently accurate, allowing for quick
  verification times while avoiding a large number of refinement
  steps.  For evaluation purposes, we implemented \cegartwo{} as an
  extension to the recently proposed \cegarnn{} framework. Our results
  are highly promising, and demonstrate a significant improvement in
  performance over multiple benchmarks.
\end{abstract}

\providecommand{\keywords}[1]
{
	\textbf{\textit{Keywords: }} #1
}
%\keywords{Neural Networks, Verification, Abstraction Refinement, CEGAR}

% The table of contents below is added for your convenience. Please do not use
% the table of contents if you are preparing your paper for publication in the
% EPiC Series or Kalpa Publications series

\setcounter{tocdepth}{2}
%{\small
%\tableofcontents}

%\section{To mention}
%
%Processing in EasyChair - number of pages.
%
%Examples of how EasyChair processes papers. Caveats (replacement of EC
%class, errors).

\section{Introduction}
Deep neural networks (DNNs) have become state-of-the-art technology in
many fields~\cite{Liu2017}, including image processing~\cite{heZaReSu2015deep},
computational photography~\cite{BaOzSi2019}, speech
recognition~\cite{GuQiChPaZhYuHaWaZhWuPa2020, AlSaViGlIf2020}, natural
language processing~\cite{DeChLeTo2019}, video
processing~\cite{KaToShLeSuFe2014,ApAdMeMePa2021}, autonomous
driving~\cite{BoDeDwFiFlGoJaMoMuZhZhZhZi16}, and many
others. Nowadays, they are even increasingly being used as critical
components in various systems~\cite{ShFaDeChAl2021, SuMiLiJi2021,
	LeMcCo2021, MaAbNeCh2017}, and society's reliance on them is
constantly increasing.

Despite these remarkable achievements, neural networks suffer from
multiple limitations, which undermine their reliability: the training
process of DNNs is based on assumptions regarding the data, which may
fail to hold later on~\cite{JoKh2019,GuLi2020}; the training process
might cause over-fitting of the DNN to specific
data~\cite{SrHiKrSuSa2014,ZhViMuBe2018}; and, independently of the
above, there are various attacks that can fool a DNN into
performing unwanted actions~\cite{ReZhQiLi2020,HaYaHaDeHuJiAn2019}.

In order to overcome these difficulties and ensure the correctness and
safety of DNNs, the formal methods community has been devising
techniques for verifying them~\cite{KaBaDiJuKo17Reluplex,
	KaHuIbJuLaLiShThWuZeDiKoBa19Marabou, Ehlers2017,
	MuMaSiPuVe2022, WaZhXuLiJaHsKo2021}. Techniques for neural network
verification (NNV) receive as input a neural network and a set of
constraints over its input and output, and check whether there exists
an input/output pair that satisfies these constraints. Typically, the
constraints encode the negation of some desirable property, and so
such a pair constitutes a counter-example (the \sat{} case); whereas
if no such pair exists, the desired property holds (the \unsat{} case).
NNV has been studied extensively in recent years, and many different
verifiers have been proposed~\cite{KaBaDiJuKo17Reluplex,DuChJhSaTi2019,PaWuGrCaPaBa2021,DuJhSaTi18,
	KaHuIbJuLaLiShThWuZeDiKoBa19Marabou, Ehlers2017, 
	MuMaSiPuVe2022, WaZhXuLiJaHsKo2021}.   However,
scalability remains a fundamental barrier, both theoretical and
practical, which limits the use of NNV engines: generally, increasing the
number of neurons of the verified neural network implies an
exponential increase in the complexity of the verification
problem~\cite{KaBaDiJuKo17Reluplex, IsZoBa23}.

In order to alleviate this scalability limitation, recently there have
been attempts to apply \emph{abstraction techniques} within
NNV~\cite{PrAf2020, SoTh2020, AsHaKrMu20, ElGoKa2020, ElCoKa2022,
	LiXiShSoXuMi2022}, often focusing on the counter-example guided
abstraction refinement (CEGAR) framework~\cite{ClGrJhLuVe10CEGAR}.
CEGAR is a well-known approach, aimed at expediting the solving of
complex verification queries, by \emph{abstracting} the model being
verified into a simpler one --- in such a way that if the simpler
model is safe, i.e.~the query is \unsat{}, then so is the original
model.  In case of a \sat{} answer from the verifier, we check whether
the satisfying assignment of the abstract model is also a satisfying
assignment of the original. If so, the original query is declared
\sat{}, the satisfying assignment is returned, and the process
ends. Otherwise, the satisfying assignment is called \emph{spurious}
(or \emph{erroneous}), indicating that the abstract query is too
coarse, and should be refined into a slightly ``less abstract'' query,
which is a bit closer to the original model (but is hopefully still
smaller). CEGAR has been successfully used in various formal
method
applications~\cite{AnLiSa2005,BeLe2016,JaKlMaCl2016,HeRuGo2017,JoErBe2015},
and also in the context of NNV~\cite{ElGoKa2020, ElCoKa2022,
	LiXiShSoXuMi2022}. Typically, these approaches generate an
\emph{abstract neural network}, which is smaller than the original,
and is created by the merging of neurons of the original network. The
refinement, accordingly, is performed by splitting previously merged
neurons. Other related approaches have also considered abstracting the
ranges of values obtained by neurons in the network~\cite{PrAf2020,
	SoTh2020, AsHaKrMu20, PuTa10}.

The general motivation for abstraction schemes is that smaller,
abstract networks are easier to verify. While this is often true,
smaller networks tend to be far less precise, and verifying them often
requires multiple refinement steps~\cite{ElGoKa2020, PuTa10}. In
extreme cases, these multiple refinement steps can render the
verification process slower than directly verifying the original
network~\cite{ElGoKa2020}. Here, we seek to tackle this problem, by
improving the abstract verification queries. We propose a novel
verification scheme for DNNs, wherein abstraction and refinement
operations include altering not only the network (as
in~\cite{ClGrJhLuVe10CEGAR, ElGoKa2020, ElCoKa2022, LiXiShSoXuMi2022,
	PuTa10}), but also the property being verified. The motivation is to
render the abstract properties more \emph{restrictive}, in a way that
will reduce the number of spurious counter-examples encountered during
the verification process; but at the same time, ensure that the
abstract queries still maintain the over-approximation property: if
the abstract query is \unsat{}, the original query is \unsat{} too.
The key idea on which our approach is based is to compute a
\emph{minimal difference} between the outputs of the abstract network
and the original network, and then use this minimal difference to
tighten the property being verified, in a sound manner. 

Our approach is not coupled to any specific DNN verification method,
and can use multiple DNN verifiers as black-box backends.  For
evaluation purposes, we implemented it on top of the Marabou DNN
verifier~\cite{KaHuIbJuLaLiShThWuZeDiKoBa19Marabou}. We then tested
our approach on the ACAS-Xu benchmarks~\cite{JuLoBrOwKo2016} for
airborne collision avoidance, and also on MNIST benchmarks for digit
recognition~\cite{Le98}. Our results indicate that property
abstraction affords a significant increase in the number of queries
that can be verified within a given timeout, as well as a sharp
decrease in the number of refinement steps performed.

To summarize, our contributions are as follows:
\begin{inparaenum}[(i)]
	\item we present \cegartwo{}, a novel CEGAR
	framework for DNN verification, that abstracts not only the network
	but also the property being verified;
	\item we provide a publicly available implementation of our approach, \cegarnntwo{}; and
	\item we use our implementation to demonstrate the practical
	  usefulness of our approach.
%	\item we implement our approach in a new tool, \cegarnntwo{}, and demonstrate its practical usefulness.
\end{inparaenum}

The rest of this paper is organized as follows. In
Section~\ref{sec:background}, we provide a brief background on neural
networks and their verification, followed by an explanation of the CEGAR
framework and its implementation for neural network verification. In
Section~\ref{sec:cegartwo}, we describe our novel verification
framework \cegartwo{}. In Section~\ref{sec:cegarnntwo}, we discuss how
to apply this framework for abstracting and refining DNNs, followed by
an evaluation in Section~\ref{sec:evaluation}.  Related work is
discussed in Section~\ref{sec:relatedWork}, and we conclude in
Section~\ref{sec:conclusion}.

\section{Background}
\label{sec:background}

\subsection{Neural Networks}

A neural network~\cite{Goodfellow-et-al-2016} is a directed graph,
comprised of a sequence of \textit{layers}: an input layer, followed
by one or more consecutive hidden layers, and finally an output
layer. A layer is a collection of nodes, also referred to as
\textit{neurons}.  Here we focus on \textit{feed-forward} neural
networks, where the values of neurons are computed based on values of
neurons in preceding layers. Thus, when the network is evaluated,
values are assigned to neurons in its input layer; and they are then
propagated, layer after layer, through to the output layer.

We use $n_{i,j}$ to denote the $j$'th neuron of layer $i$.  Typically,
the value of neuron $n_{i,j}$, denoted as $v_{i,j}$, is given by the following
formula:
\[
v_{i,j} = act_{i,j}(b_{i,j} + \sum_{k=1}^{l_{i-1}} w^i_{k,j}\cdot v_{i-1,k})
\]
where $l_{i-1}$ is the number of neurons in the $i-1$'th layer,
$act_{i,j}$ is a pre-defined (neuron-specific) activation function,
$w^i_{k,j}$ is the weight of the outgoing edge from $n_{i-1,k}$ to
$n_{i,j}$, 
$v_{i-1,k}$ is the value of the $k$'th neuron in the $i-1$'th layer, 
and $ b_{i,j} $ is the bias value of the $j$'th neuron in the
$i$'th layer. For simplicity, we assume here that the only
activation function in use is the Rectified Linear Unit (\relu{})
function~\cite{NaHi10}, which is defined by $\relu(x)=max(0,x)$, and
is very common in practice.

Fig.~\ref{fig:runningExample} depicts a small neural network. The
network has 3 layers, of sizes $l_1=1, l_2=2$ and $l_3=1$. Its weights
are $w^2_{1,1}=10$, $w^2_{1,2}=1$, $w^3_{1,1}=3$ and
$w^3_{2,1}=4$, and its biases are all zeros. 
For input $v_{1,1}=21$, node $n_{2,1}$ evaluates to 210
and node $n_{2,2}$ evaluates to 21 (both are positive, and hence not
changed by the \relu{} activation function). The output node $n_{3,1}$
then evaluates to $3\cdot 210 + 4\cdot 20 = 714$.

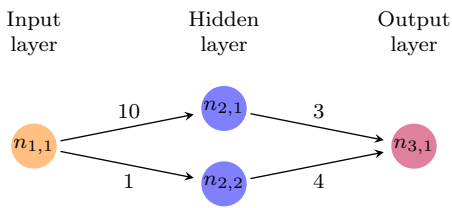
\begin{figure}[htp]
	\begin{center}
		\scalebox{1} {
			\def\layersep{2.5cm}
			\begin{tikzpicture}[shorten >=1pt,->,draw=black!50, node distance=\layersep,font=\footnotesize]
				
				\node[input neuron] (I-1) at (0,-1) {$n_{1,1}$};
				
				\path[yshift=0.5cm] node[hidden neuron] (H-1)
				at (\layersep,-1 cm) {$n_{2,1}$};
				\path[yshift=0.5cm] node[hidden neuron] (H-2)
				at (\layersep,-2 cm) {$n_{2,2}$};
				
				\node[output neuron] at (2*\layersep, -1) (O-1) {$n_{3,1}$};
				
				% Connect every node in the hidden layer with the output layer
				\draw[nnedge] (I-1) -- node[above] {$10$} (H-1);
				\draw[nnedge] (I-1) -- node[below] {$1$} (H-2);
				\draw[nnedge] (H-1) -- node[above] {$3$} (O-1);
				\draw[nnedge] (H-2) -- node[below] {$4$} (O-1);
				
				% Annotate the layers
				\node[annot,above of=H-1, node distance=1cm] (hl) {Hidden layer};
				\node[annot,left of=hl] {Input layer};
				\node[annot,right of=hl] {Output layer};
			\end{tikzpicture}
		}
		\caption{A simple, feed-forward neural network.}
		\label{fig:runningExample}
	\end{center}
\end{figure}

\subsection{Neural Network Verification}

The goal of neural network verification (NNV) is to determine the
satisfiability of a \emph{verification query}. A query is typically
defined to be a triple $\langle N,P,Q\rangle$, where:
\begin{inparaenum}[(i)]
	\item $N$ is a neural network;
	\item $P$ is an input property, which is a conjunction of constraints
	on the input neurons; and
	\item $Q$ is an output property, which is a conjunction of constraints
	on the output neurons~\cite{LiArLaBaKo19}.
\end{inparaenum}
The query is \sat{} if and only if there exists an input vector $x_0$
to $N$, such that $P(x_0)$ and $Q(N(x_0))$ both hold; in which case
the verifier returns $x_0$ as the counter-example. As we previously
mentioned, $Q$ typically represents some undesirable behavior of $N$
on inputs from $P$, and so the goal is to obtain an \unsat{} result.

Most existing verifiers focus primarily on \relu{} activation
functions, and we follow this line here. In addition, most existing
verifiers assume that $P$ is a conjunction of linear constraints on
the input values, and we again take the same path. Finally, we make
the simplifying assumption that $N$ has a single output neuron $y$,
and that the property $Q$ is of the form $y>c$. This assumption may
seem restrictive, but in fact, it does not incur any loss of
generality~\cite{ElGoKa2020}, and is sufficient for expressing many
properties of interest with arbitrary Boolean structure, via a simple
reduction.

In recent years, various methods have been proposed for solving the
verification problem (for a brief overview, see
Section~\ref{sec:relatedWork}). Our abstraction-refinement mechanism
is designed to be compatible with many of these techniques, as we
later describe.

\subsection{Counter-Example Guided Abstraction Refinement (CEGAR)}
Counter-example guided abstraction refinement
(CEGAR)~\cite{ClGrJhLuVe10CEGAR} is frequently used as part of
model-checking frameworks, and it has recently been applied to neural
network verification, as well.  The general framework, borrowed from
\cite{ElGoKa2020}, is presented in Algorithm~\ref{alg:abVerification}.
Given a \textit{verification query} $\langle N,P,Q\rangle$, we begin
by generating an \emph{abstract network} $N'$. Then, the CEGAR loop
starts, where in each iteration, we verify a query with the current
abstract network, $ \langle N',P,Q\rangle $. The abstract network $N'$
is constructed in a way that makes $ \langle N',P,Q\rangle $ an
over-approximation of $ \langle N,P,Q\rangle $: if the former is
\unsat{}, then so is the latter. Thus, if the underlying verifier
returns \unsat{} on the current query, we can stop and return
\unsat{}.  Otherwise, the verifier returns a satisfying assignment
$x_0$ for $ \langle N',P,Q\rangle $, and we check whether this $x_0$
constitutes a satisfying assignment for $ \langle N,P,Q\rangle $ as
well. If so, we return \sat{} and $x_0$ as the satisfying assignment,
and stop. Otherwise, we say that $x_0$ is a \emph{spurious}
counter-example, indicating that $N'$ is too coarse; in which case, we
\emph{refine} $N'$ into a new ``tighter'' network, $N''$, whose
verification is more likely to produce accurate results. This
refinement process is guided by the spurious counter-example
$x_0$. This general framework can be instantiated in many ways, depending on the
implementation of the \abstractOp{} and \refineOp{} operations.

\begin{algorithm}
	\caption{Abstraction-based DNN Verification($\langle N,P,Q\rangle$)}
	\begin{algorithmic}[1]
		\label{alg:cegar}
		\STATE $N'\leftarrow \abstractOp{}(N)$ \label{step:abstraction}
		\IF {\emph{Verify}($\langle N',P,Q\rangle$) is \unsat{}} \label{step:verify}
		\STATE return \unsat{}
		\ELSE
		\STATE Extract satisfying assignment $x_0$
		\IF {$x_0$ is a satisfying assignment for $N$}
		\STATE return \sat{}, $x_0$
		\ELSE
		\STATE $N''\leftarrow \refineOp{}(N',N,x_0)$ \label{step:refinement}
		\STATE $N'\leftarrow N''$
		\STATE Goto step~\ref{step:verify}
		\ENDIF
		\ENDIF
	\end{algorithmic}
	\label{alg:abVerification}
\end{algorithm}

There have been a few recent attempts to apply CEGAR in the
context of NNV~\cite{ElGoKa2020, LiXiShSoXuMi2022, ElCoKa2022}, all
following a similar approach.  At first, a preprocessing phase is
performed, and every hidden neuron in the network is classified
according to its effect on the network's output. Then, abstraction is
carried out by repeatedly merging pairs of neurons with the same type,
usually making the network significantly smaller than the original.

We focus here on one of these approaches, called
\cegarnn{}~\cite{ElGoKa2020}. There,  the
preprocessing phase initially splits each hidden neuron into 4
neurons, each belonging to one of 4 categories based on the effect of the
neuron on values of both the next layer's neurons and the output: the
output edges can be all positive (\textit{pos}) or all negative
(\textit{neg}), and the value of a neuron can increase the output when
being increased (\textit{inc}), or increase the output when being
decreased (\textit{dec}).  The splitting process changes the network's
architecture but does not change its output --- i.e., the preprocessed
network is completely equivalent to the original.  After splitting the
neurons and categorizing the new neurons, pairs of neurons from the
same layer that share a category can be merged into a single neuron.
The weights and biases of the new, merged neuron are determined by
aggregating the weights and biases of its constituent neurons, in a
way that depends on the category of these neurons, and which
guarantees that the abstract network's (single) output is always
greater than or equal to that of the original network when the two
networks are evaluated on the same input. This, combined with our
assumption that the output property is always of the form $y>c$,
guarantees that the verification query on the abstract network
constitutes an over-approximation of the original query.  Finally,
this pair-wise merging is then repeated, resulting in a much smaller
network.

An example showing the result of applying this abstraction to the network from
Fig.~\ref{fig:runningExample} appears in
Fig.~\ref{fig:runningExample-cegarnn-abstraction}: $n_{2,1}$ and
$n_{2,2}$ both already belong to the same category (in this tiny network, 
all other categories are empty). We merge them into a single abstract 
neuron, $n_{2,1+2}$. The output weights are aggregated by a sum operation, 
so we get an output weight of $3+4=7$.  For the input weight aggregation, 
we take the maximal value of the two, so we get an input weight of
$\max(10,1)=10$. The key property here is that for every input $x$, $N'(x)\geq N(x)$.  For additional details, as well as a
discussion of various heuristics for selecting which neurons should be
merged and in what order, see~\cite{ElGoKa2020, LiXiShSoXuMi2022}.

\begin{figure}[htp]
	\begin{center}
		\scalebox{1} {
			\def\layersep{2.5cm}
			\begin{tikzpicture}[shorten >=1pt,->,draw=black!50, node distance=\layersep,font=\footnotesize]
				
				\node[input neuron] (I-1) at (0,-1) {$n_{1,1}$};
				
				\path node[hidden neuron] (H-1)
				at (\layersep,-1 cm) {$n_{2,1+2}$};
				
				\node[output neuron] at (2*\layersep, -1) (O-1) {$n_{3,1}$};
				
				% Connect every node in the hidden layer with the output layer
				\draw[nnedge] (I-1) -- node[above] {$10$} (H-1);
				\draw[nnedge] (H-1) -- node[above] {$7$} (O-1);
				
				% Annotate the layers
				\node[annot,above of=H-1, node distance=1cm] (hl) {Hidden layer};
				\node[annot,left of=hl] {Input layer};
				\node[annot,right of=hl] {Output layer};
			\end{tikzpicture}
		}
		\caption{An abstract network, generated from the
			network from Fig.~\ref{fig:runningExample}.}
		\label{fig:runningExample-cegarnn-abstraction}
	\end{center}
\end{figure}
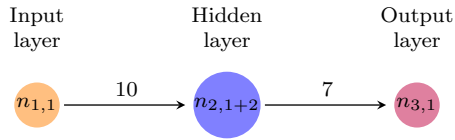

The refinement operation is then carried out by splitting an abstract
neuron, which represents a set of original neurons, into two or more
new neurons. In the example, the refinement step would be to split the
abstract neuron $n_{2,1+2}$ into the original neurons $n_{2,1}$ and
$n_{2,2}$, and to restore their original weights.

Empirical evaluations of \cegarnn{} demonstrated its great potential to
enhance the scalability of NNV engines, but as many experiments
show~\cite{ElGoKa2020}, there is much room for improvement.

\section{\cegartwo{}: Tighter Abstract Queries}\label{sec:cegartwo}
\subsection{Motivation}

Although \cegarnn{} is quite useful in many cases, it is also prone to
producing spurious counter-examples, which in turn triggers multiple
refinement steps that slow the process down. For example, observe
again the network $N$ from Fig.~\ref{fig:runningExample}, and consider
the verification query
\[
v_1 = \langle N,n_{1,1}\in[20,21],n_{3,1}>800 \rangle
\]
Here, a sound verifier will declare that $v_1$ is \unsat{}, because
for inputs in the range $[20,21]$, network $N$ can only produce
outputs that are upper-bounded by 
$N(21)=714$. If we attempt to verify this query using \cegarnn{}, we
would generate the abstract query
\[
\langle N', n_{1,1}\in[20,21], n_{3,1}>800 \rangle,
\]
where $N'$ is the network in
Fig.~\ref{fig:runningExample-cegarnn-abstraction}. For this query, a
sound verifier will return a satisfying assignment, such as
$n_{1,1}=20$. Of course, this assignment is spurious: although
$N'(20)=1400\geq800$, we get that $N(20)=680<800$. Thus, refinement
would be carried out, transforming $N'$ back into the original network
$N$, and the overall process would be slower than just verifying $N$
directly. More broadly, we recognize the following issue with \cegarnn{} and
related techniques:

\mysubsection{Performance vs Accuracy.}  Neuron-merging-based
abstraction techniques, such as \cegarnn{}, have an intrinsic
trade-off between performance and accuracy. In order to avoid coarse
model abstractions and the ensuing spurious counter-examples, it is
desirable to look for \textit{accurate} model
abstractions. On one hand, a common approach in CEGAR-based
verification is to heuristically guess an accurate initial
abstract model, so that future satisfying assignments will not be
spurious; and if a spurious assignment is discovered nonetheless, to
try and heuristically select a refinement operation that will restore
as much accuracy as possible. On the other hand, a model with a higher
accuracy is almost always larger and hence verifying it is slower.
Thus, these two requirements conflict with each other:
generating an \textit{accurate} abstract model restricts our ability
to generate small abstractions, and results instead in larger
models that take longer to verify.

\subsection{Introducing Property Abstractions}
In order to overcome the aforementioned issue, we introduce an
extension to CEGAR, which we term \cegartwo{}: CEGAR Enhanced by TighTEning.  In \cegartwo{}, when we are given a
verification query $\langle N, P, Q\rangle$, instead of abstracting
and refining only the network $N$, we may also alter the output
property $Q$ in order to produce an over-approximate query
$\langle N', P, Q'\rangle$.

To set the stage, we introduce the following definitions:
\begin{definition}	% abstract query
	A verification query $\langle N',P,Q'\rangle $ is an
	\textit{over-approximation} of another verification query
	$\langle N,P,Q\rangle$, if and only if the unsatisfiability of
	$\langle N',P,Q'\rangle $ implies the unsatisfiability of
	$\langle N,P,Q\rangle$.
\end{definition}
We refer to the process in which $\langle N',P,Q'\rangle $ is created
from $\langle N,P,Q\rangle $ as \emph{query abstraction}.

\begin{definition}
	Let $\langle N,P,Q\rangle$ be some base verification query.
	A query
	$\langle N'',P,Q''\rangle $ is called a refinement of
	a query
	$\langle N',P,Q'\rangle$, if
	\begin{inparaenum}[(i)]
		\item
		$\langle N'',P,Q''\rangle $ is an over-approximation of
		$\langle N,P,Q\rangle$; and
		\item
		$\langle N',P,Q'\rangle $ is an over-approximation of
		$\langle N'',P,Q''\rangle$.
	\end{inparaenum}
\end{definition}

To illustrate the effect of changing the property, we consider again
the verification query from our running example:
$v_1=\langle N,n_{1,1}\in[20,21],n_{3,1}>800 \rangle$.  Let us
consider what happens to this query if we change its output property.
Decreasing the constant $800$, for example by setting $ n_{3,1}>400 $,
renders the property easier to satisfy (e.g., the output 600 satisfies
the latter output property, but not the former).  Therefore, if
$\langle N,n_{1,1}\in[20,21],n_{3,1}>400 \rangle$ is \unsat{}, then
$v_1$ is also \unsat{}.  Consequently, we claim that decreasing the
constant that appears in an output property $Q$ results in an
over-approximation.

While the aforementioned process is sound, it goes against
the grain of our desired approach: decreasing the constant in the output
property could potentially result in additional, not fewer, spurious
counter-examples.  Thus, what we would like to do is to
\emph{increase} the constant that appears in the output property, in
order to rule out spurious counter-examples.  More formally, there is
a set of spurious inputs $S$, whose outputs satisfy the property in
the abstract network: $\forall x\in S: (N'(x)>c)$, but not in the
original network, where their outputs are smaller:
$\forall x\in S: (N(x)\leq c)$. By increasing the constant in the
output property, we seek to avoid these spurious examples.

As it turns out, there are cases in which we can increase the
output bound, and still obtain an over-approximate query. Consider the
query 
\[
v_2=\langle N',n_{1,1}\in[20,21],n_{3,1}>1486 \rangle
\]
In
$v_2$, there are changes (with respect to $v_1$) in both the model $N$
and the output property $Q$.  By applying changes to the model $N$ and
generating $N'$, the output increases: for every input $x$,
$N(x) \leq N'(x)$. In other words, there is a minimal (non-negative)
difference between the outputs:

\begin{equation}\label{eq:abs-min-diff}
	\exists d\geq0: \forall x : N(x) + d \leq N'(x)  
\end{equation}
Given that the input property $P:=x\in[20,21]$ was not changed, and
assuming that we can calculate $d$, we can \textit{increase} the
output constant by any number in $[0,d]$, and still get an over-approximate
query; if the value in the abstract network is smaller than $c+d$, the
output of the original network is bounded from above by $(c+d)-d=c$
(by Eq.~\ref{eq:abs-min-diff}).

Going back to our running example, we need to calculate the minimal
difference between the respective outputs of the networks, for any
input vector in the range specified by property $P$. For $P=[20,21]$,
it can be shown that the minimal difference is bounded by
$d\geq N'(20)-N(21)=1400-714=686$.  Consequently, we can increase the
constant in the output property from 800 (in $v_1$) to $800+686=1486$
(in $v_2$), and still get an over-approximate query: if
$\forall x\in[20,21]: N'(x) \leq 1486$, then (by difference of bounds)
$\forall x\in[20,21]: N(x) \leq 1486-686=800$.  Making this adjustment
rules out the spurious counter-example we saw before, namely
$N'(20)=1400$.

More broadly, the basic idea underlying \cegartwo{} is
that in order to produce a small but accurate abstraction,
the output property ($Q$) should be tightened as much as possible in
parallel to reducing the size of the neural network ($N$). 
The abstraction became more refined, and as a consequence, the counter example is more relevant, and the number of refinement steps should decrease.
Naturally, the abstraction and tightening processes are linked, as the network determines the possible output properties. 

The example shows how changing $N$ enables
tightening $Q$, but \cegartwo{} is not limited to a specific order;
it is also possible to first increase the constant of $Q$ and only
then change $N$, as long as the over-approximation property is maintained.

Algorithm \ref{alg:abVerification2} shows the general outline of
the \cegartwo{} framework.  First, we generate an initial \textit{abstract
	verification query} using \queryAbstractOp{}.  Then a loop starts,
where in each iteration we verify the current abstract query.  If the
answer is \unsat{}, we are guaranteed that the original query is also
\unsat{} (by the over-approximation property), and can stop and return
\unsat{}. Otherwise, the satisfying assignment $x_0$ is examined in
the original model, and if it is also a satisfying assignment there,
we return \sat{} and $x_0$.  In the case where $x_0$ is not a
satisfying assignment for $\langle N,P,Q\rangle$, the current abstract
query is apparently too coarse and is thus refined using
\queryRefineOp{} --- producing a more precise abstract query, for the
next iteration.

\begin{algorithm}
	\caption{\cegartwo-based Verification($N,P,Q$)}
	\begin{algorithmic}[1]
		\label{alg:cegar2}
		\STATE $\langle N', P, Q'\rangle\leftarrow$
		\queryAbstractOp{$(\langle N, P, Q\rangle)$}
		\label{step:query-abstraction}
		\IF {\emph{Verify}$\langle N', P, Q'\rangle$ is \unsat{}} \label{step:verify-abstract-query}
		\STATE return \unsat{}
		\ELSE
		\STATE Extract counterexample $x_0$
		\IF {$x_0$ is a counterexample for $\langle N, P, Q\rangle$}
		\STATE return \sat{}, $x_0$
		\ELSE
		\STATE $\langle N'',P,Q''\rangle\leftarrow$ \queryRefineOp{$(\langle N',P,Q'\rangle, N, x_0)$}   \label{step:query-refinement}
		\STATE $\langle N',P,Q'\rangle\leftarrow \langle N'',P,Q''\rangle$
		\STATE Goto step~\ref{step:verify-abstract-query}
		\ENDIF
		\ENDIF
	\end{algorithmic}
	\label{alg:abVerification2}
\end{algorithm}

The structure of \cegartwo{} is similar to that of \cegarnn{}, but
instead of invoking the \abstractOp{} and \refineOp{} operations as in
Algorithm \ref{alg:abVerification}, which only abstract and refine the
network, \cegartwo{} invokes \queryAbstractOp{} and \queryRefineOp{},
which abstract and refine both the network and the output property.  Therefore,
Algorithm~\ref{alg:abVerification} is a special case of
Algorithm~\ref{alg:abVerification2}, and \cegartwo{} extends CEGAR.

\section{DNN Verification Using \cegartwo{}}\label{sec:cegarnntwo}
In this section, we describe \cegarnntwo{}, which is our implementation
of \cegartwo{} for NNV. Specifically, we propose a particular
implementation of the \queryAbstractOp{} and \queryRefineOp{}
operators, which modify the model and the output property in a way
that soundly produces over-approximations of the original query.

\sloppy
Given a verification query $\langle N,P,Q\rangle$, our proposed
implementation of \queryAbstractOp{} is shown as
Algorithm~\ref{alg:cegarnntwo-query-abstract-op}. It begins by
generating $N'$, an abstract neural network, using the same
\abstractOp{} from the \cegarnn{} framework; and then proceeds to
tighten the output property, by adding to the output constant a scalar
$d$, which lower-bounds the minimal difference in outputs between $N'$
and $N$.

\begin{algorithm}
	\caption{AbstractQueryGeneration($N,P,Q$)}
	\begin{algorithmic}[1]
		\STATE $N'\leftarrow \abstractOp{}(N)$
		\STATE $Q' \leftarrow$ TightenProperty($N',N,P,Q$) 
		\STATE return $\langle N',P,Q' \rangle$
	\end{algorithmic}
	\label{alg:cegarnntwo-query-abstract-op}
\end{algorithm}

In Algorithm~\ref{alg:under-approximate-property} we describe how to
compute the constant $d$. This is performed by calculating the lower
bound of the abstract network's output, and the upper bound of the
original network's output, and then subtracting the latter from the
former. If the result is positive, it can be used in order to update
the output property in the over-approximate verification query. The
output property is of the form $Q:=y<c$ for some constant $c$ (denoted
by $Q_c$ in Algorithm~\ref{alg:under-approximate-property}), and so
the update is performed by setting $Q':=y<c+d$.

\begin{algorithm}
	\caption{TightenProperty($N',N,P,Q$)}
	\begin{algorithmic}[1]
		\STATE Compute $l_{N'}$, a lower bound on the output of $N'$
		\STATE Compute $u_{N}$ an upper bound on the output of $N$
		\STATE $ d = \max (0, l_{N'}-u_{N}) $
		\STATE {$Q':= y > Q_c + d$}
		\RETURN $Q'$
	\end{algorithmic}
	\label{alg:under-approximate-property}
\end{algorithm}

Computing the lower and upper bounds of the abstract and the original networks with respect to a given input range in Algorithm \ref{alg:under-approximate-property} is based on \textit{bound propagation methods}, which maintain and propagate tractable and sound bounds through neural networks. Bound propagation has been studied extensively, and there are many scalable methods intended for this purpose~\cite{GoDvStBuQiUeArMaKo2018,KrDvStGoMaKo2018,GeMiDrTsChVe18,SiGePuVe2019,WaPeWhYaJa2018,WoKo2017,KaZhHuYiKaMiBhXuCh2020,ZhWeChHsDa2018,WaZhXuLiJaHsKo2021}.
We give a simple example later on.

\begin{lemma}\label{lemma:cegarnntwo-abstract}
	Algorithm \ref{alg:cegarnntwo-query-abstract-op} returns an
	over-approximation of the verification query passed to it as input.
\end{lemma}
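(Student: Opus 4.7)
The plan is to prove the contrapositive: assume $\langle N,P,Q\rangle$ is \sat{} via witness $x_0$ (so $P(x_0)$ holds and $N(x_0) > c$, where $Q := y > c$), and show that the same $x_0$ satisfies $\langle N',P,Q'\rangle$, where $Q' := y > c+d$. Since $x_0$ already satisfies $P$, the only thing to verify is $N'(x_0) > c + d$.

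First, I would unpack the two sources of soundness that Algorithm~\ref{alg:cegarnntwo-query-abstract-op} composes. From the \cegarnn{} construction recalled in Section~\ref{sec:background}, the call to \abstractOp{} is guaranteed to produce an $N'$ with $N'(x) \geq N(x)$ for every input $x$. From the soundness of whichever bound propagation method is used in Algorithm~\ref{alg:under-approximate-property}, we have that for every input $x$ satisfying $P$, $l_{N'} \leq N'(x)$ and $N(x) \leq u_N$. Subtracting, this yields the key inequality $N'(x) - N(x) \geq l_{N'} - u_N$ for all such $x$.

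Now I would split on the definition $d = \max(0,\, l_{N'} - u_N)$. If $l_{N'} - u_N \geq 0$, then $d = l_{N'} - u_N$ and the key inequality directly gives $N'(x_0) \geq N(x_0) + d > c + d$, as desired. If instead $l_{N'} - u_N < 0$, then $d = 0$ and the goal reduces to $N'(x_0) > c$, which is immediate from $N'(x_0) \geq N(x_0) > c$ using the \cegarnn{} abstraction property. Either way, $x_0$ witnesses satisfiability of $\langle N',P,Q'\rangle$, so by contraposition \unsat{} of $\langle N',P,Q'\rangle$ implies \unsat{} of $\langle N,P,Q\rangle$, which is exactly the definition of over-approximation.

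No step is really an obstacle: the whole argument is a short chain of soundness facts about the two sub-routines, together with a case split on the sign of $l_{N'} - u_N$. The only mild subtlety is making sure the strict inequality in $Q$ is preserved (which is why I add $d$ on both sides rather than appealing to a non-strict bound), and being explicit that the bound propagation guarantees must hold under the input constraint $P$, since $l_{N'}$ and $u_N$ are bounds relative to the input region defined by $P$ rather than global bounds on the networks.
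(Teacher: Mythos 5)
Your proposal is correct and follows essentially the same route as the paper's own proof: argue the contrapositive (a witness $x_0$ for $\langle N,P,Q\rangle$ is also a witness for $\langle N',P,Q'\rangle$), using the \cegarnn{} guarantee $N'(x)\geq N(x)$ together with the bound-propagation fact $N'(x)-N(x)\geq l_{N'}-u_N$ to conclude $N'(x_0)\geq N(x_0)+d>c+d$. Your explicit case split on the sign of $l_{N'}-u_N$ and your remark that the bounds are relative to the input region $P$ merely spell out details the paper compresses into one sentence.
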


\begin{proof}
	Algorithm~\ref{alg:cegarnntwo-query-abstract-op} begins by
	generating $N'$, an abstract network, from $N$. Then it invokes
	TightenProperty, described in
	Algorithm~\ref{alg:under-approximate-property}, which sets
	$Q':=y'\leq c+max(0,l_{N'}-u_N)$, where $u_N$ is an upper bound for
	the output of $N$ and $l_{N'}$ is a lower bound for the output of
	$N'$. We now wish to prove that if $\langle N',P,Q'\rangle$ is
	\unsat{}, then $\langle N,P,Q\rangle$ is also \unsat{}. Differently
	put, we need to show that 
	\[
	\forall x\in P: N'(x)\leq c+max(0,l_{N'}-u_N) \Rightarrow \forall x\in P:
	N(x)\leq c
	\]
	This is equivalent, modus tollens, to proving that
	\[
	\exists x\in P: N(x) > c \Rightarrow \exists x\in P: N'(x) >
	c+max(0,l_{N'}-u_N)
	\]
	The last implication holds since we know that for every input $x$,
	the output increases after abstracting $N$ to $N'$ by at least
	$l_{N'}-u_N$; and we know also that the output cannot
	decrease. Overall, model abstraction increases $x$'s output by at
	least $max(0,l_{N'}-u_N)$, and hence the implication holds.
\end{proof}

Next, \queryRefineOp{} is implemented in Algorithm
\ref{alg:cegarnntwo-query-refine-op}. First, we refine the previous
abstraction of the network (by splitting neurons that had previously
been merged), and then computing a new output property, from scratch,
with respect to this new network.

\begin{algorithm}
	\caption{RefinedQueryGeneration($\langle N',P,Q'\rangle,N,x_0$)}
	\begin{algorithmic}[1]
		\STATE $N''\leftarrow \refineOp(N',N,x_0)$
		\STATE $Q'' =$ TightenProperty($N'',N,P,Q$) 
		\STATE return $\langle N'',P,Q'' \rangle$
	\end{algorithmic}
	\label{alg:cegarnntwo-query-refine-op}
\end{algorithm}

\begin{lemma}\label{lemma:cegarnntwo-refine}
	Algorithm~\ref{alg:cegarnntwo-query-refine-op} returns an 
	over-approximation of the verification query $\langle N,P,Q\rangle$.
\end{lemma}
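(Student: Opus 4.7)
The plan is to reduce this lemma to Lemma~\ref{lemma:cegarnntwo-abstract}, by exploiting the observation that the soundness argument for \queryAbstractOp{} relied on only two ingredients: the abstract network handed to TightenProperty over-approximates $N$, and the bound propagation used inside TightenProperty returns sound bounds. If I can show that the refined network $N''$ produced on line 1 of Algorithm~\ref{alg:cegarnntwo-query-refine-op} still over-approximates $N$, then the rest of the proof is a verbatim replay of the earlier argument with $N''$ replacing $N'$. A further simplification is that TightenProperty is invoked on the \emph{original} $Q$ (not on $Q'$), so the tightened bound is recomputed from scratch rather than composed on top of the previous abstraction; we therefore only need to reason locally, not inductively, about the sequence of refinements.

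First I would recall from the description of \cegarnn{} in Section~\ref{sec:background} how \refineOp{}$(N', N, x_0)$ operates: it takes an abstract neuron of $N'$ (representing several merged neurons of $N$) and splits it into two or more neurons, restoring a portion of the structure of $N$. The invariant preserved throughout \cegarnn{}'s pair-wise merging is that the abstract network's single output is always at least the original network's output on any shared input. Splitting a previously merged neuron strictly moves the network closer to $N$ without violating this invariant, so for every input $x$ we still have $N''(x) \geq N(x)$.

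Next I would apply TightenProperty$(N'', N, P, Q)$ exactly as in the proof of Lemma~\ref{lemma:cegarnntwo-abstract}, obtaining $Q'' := y > c + \max(0, l_{N''} - u_N)$, where $l_{N''}$ is a sound lower bound on the output of $N''$ and $u_N$ is a sound upper bound on the output of $N$, both over $x \in P$. Replaying the contrapositive argument: if some $x \in P$ satisfies $N(x) > c$, then since $N''(x) \geq N(x)$ and (whenever $l_{N''} \geq u_N$) the gap $N''(x) - N(x)$ is at least $l_{N''} - u_N$, we get $N''(x) > c + \max(0, l_{N''} - u_N)$. Hence unsatisfiability of $\langle N'', P, Q''\rangle$ forces unsatisfiability of $\langle N, P, Q\rangle$, which is the desired over-approximation property.

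The only nontrivial step is the refinement-preserves-over-approximation claim, and it is the only piece not already covered by Lemma~\ref{lemma:cegarnntwo-abstract}. Because \refineOp{} in \cegarnn{} is defined as the inverse of particular merges, and each merge was constructed precisely to preserve $N'(x) \geq N(x)$, this step is straightforward but does require appealing to the specifics of the \cegarnn{} construction rather than to any generic CEGAR property. Everything else in the proof is bookkeeping on top of the Lemma~\ref{lemma:cegarnntwo-abstract} argument.
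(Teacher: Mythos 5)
Your proof is correct and follows essentially the same route as the paper's: the paper's own proof simply observes that $N''$ is still an abstract network generated from $N$ and then defers to the argument of Lemma~\ref{lemma:cegarnntwo-abstract}, which is exactly your reduction. Your version merely spells out the two points the paper leaves implicit --- that splitting previously merged neurons preserves the invariant $N''(x)\geq N(x)$, and that TightenProperty is recomputed from the original $Q$ so no induction over refinement steps is needed --- both of which are accurate.
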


\begin{proof}
	$N''$ is an abstract network generated from $N$. The remainder of the proof is the
	same as in the proof of Lemma~\ref{lemma:cegarnntwo-abstract}.
\end{proof}

Now that we have established the soundness of our approach via
Lemmas~\ref{lemma:cegarnntwo-abstract}
and~\ref{lemma:cegarnntwo-refine}, we proceed to prove that it
always terminates.

\begin{lemma}\label{lemma:cegarnntwo-convergence}
	For any verification query $\langle N,P,Q \rangle$, \cegarnntwo{}
	converges.
\end{lemma}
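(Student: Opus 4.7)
The plan is to reduce termination of Algorithm~\ref{alg:abVerification2} (instantiated via Algorithms~\ref{alg:cegarnntwo-query-abstract-op} and~\ref{alg:cegarnntwo-query-refine-op}) to finiteness of the refinement chain. Every iteration of the main loop either returns \sat{} or \unsat{}, or else invokes \queryRefineOp{}, which in turn invokes the underlying \refineOp{} from \cegarnn{}. Hence it suffices to argue that only boundedly many successive calls to \refineOp{} can occur, and that once no further genuine refinement is possible, the loop must exit.

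First I would recall the structure of \refineOp{} as defined in \cegarnn{}~\cite{ElGoKa2020}: given a spurious counterexample $x_0$, it splits at least one abstract neuron back into two (or more) of its constituents. I would introduce a well-founded measure on abstract networks — for instance, $\mu(N') = |N| - |N'|$, counting how many original neurons have been collapsed away through merging — and observe that each successful refinement strictly decreases $\mu$. Since $|N|$ is finite and $\mu(N') \geq 0$ throughout, only finitely many refinements can be performed before $N'$ coincides with $N$.

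Next, I would analyze what happens in the iteration where $N' = N$. In this case \texttt{TightenProperty} computes $l_{N'}$ and $u_N$ on the same network, so $l_{N'} - u_N \leq 0$, which forces $d = \max(0, l_{N'} - u_N) = 0$. Consequently $Q'$ is literally $Q$, and the abstract query $\langle N', P, Q'\rangle$ is identical to the original query $\langle N, P, Q\rangle$. Any satisfying assignment returned by the underlying verifier at this stage is necessarily a satisfying assignment for $\langle N, P, Q\rangle$, so the check on line~6 of Algorithm~\ref{alg:abVerification2} succeeds and the loop returns \sat{}; and if no such assignment exists, the verifier returns \unsat{} at line~2 and the loop returns \unsat{}. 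Either way, the procedure halts.

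The main obstacle, in my view, is not algebraic but conceptual: one must be careful that the property-tightening step does not introduce a new source of non-termination. A priori, one could worry that even when $N$ is fully recovered, the recomputed output property differs from $Q$ and keeps the loop alive, or that \queryRefineOp{} could loop without progress on the network side while merely changing $Q'$. The computation above rules out the first concern, and the strict decrease of $\mu$ under \refineOp{} rules out the second. I would make both points explicit in the write-up, then conclude by combining them: the refinement chain has length at most $|N|$, and its terminal iteration must exit the loop with a definitive answer.
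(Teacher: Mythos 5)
Your argument is correct and follows essentially the same route as the paper's proof: finitely many refinement steps are possible before $N'$ coincides with $N$ (you make this explicit with a well-founded measure, the paper argues it by contradiction), and once $N'=N$ the quantity $l_{N'}-u_N$ is non-positive, so $d=0$, $Q'=Q$, and the loop must return either \sat{} or \unsat{} on that iteration. Your write-up is, if anything, more careful than the paper's in spelling out why the tightening step cannot itself cause non-termination.
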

\begin{proof}
	\cegarnntwo{} implements Algorithm~\ref{alg:abVerification2}, using
	\queryAbstractOp{} from
	Algorithm~\ref{alg:cegarnntwo-query-abstract-op} and
	\queryRefineOp{} from
	Algorithm~\ref{alg:cegarnntwo-query-refine-op}. We show that with
	these implementations, Algorithm~\ref{alg:abVerification2}
	converges.
	
	Assume that Algorithm~\ref{alg:abVerification2} does not converges, then Line 11 takes place infinite number of times. In that case, Lines 9-10 takes place infinite number of times too. Notice that after the abstraction in Line 1, the network $N'$ only changes in Line 10. After finite number of calls to \queryRefineOp{} which is implemented by Algorithm~\ref{alg:cegarnntwo-query-refine-op}, the query $\langle N',P,Q' \rangle$ will be equal to the original query $\langle N,P,Q \rangle$;
	Algorithm~\ref{alg:under-approximate-property} refine the network and then tighten the query. When $ N'$ is fully refined ($ N' == N $), the difference between the lower bound of $N'$ and the upper bound of $N$ can't be positive and $d=0$ which implies that $ Q'==Q $. In the next iteration, both getting \unsat{} or \sat{} will terminate the verification, since the current query is equal to the original query. In contradiction to our assumption.
	
%	Since the model $N'$ is abstracted and refined first in both
%	\queryAbstractOp{} and \queryRefineOp{}, and the output property is
%	then updated accordingly, the convergence of
%	Algorithm~\ref{alg:abVerification2} is derived from the convergence
%	of Algorithm~\ref{alg:abVerification}, where only the model is
%	abstracted and refined.
\end{proof}

Returning to our running example, given the verification query $v_1$
above, applying Algorithm~\ref{alg:cegar2} triggers Algorithm
\ref{alg:cegarnntwo-query-abstract-op}, which generates $N'$ from $N$;
and then, in turn, triggers
Algorithm~\ref{alg:under-approximate-property}, which calculates $d$,
which bounds the minimal difference between $N$ and $N'$. For
simplicity, we use a na\"ive method for calculating the lower and upper bounds of a network,
called interval bound propagation, or
\IBP{}~\cite{GoDvStBuQiUeArMaKo2018}. 

In \IBP{}, bounds are propagated forward in the network, starting from
the input layer, layer by layer, to the output layer. The method
assumes that lower and upper bounds for the input neurons are provided
a priori, and then uses a linear combination of the bounds of neurons
from one layer to compute lower and upper bounds for neurons in the
following layer. The linear combination is decided according to the
weights of the edges that connect the two layers.

As an example, we show how to compute the bounds for neuron $n_{3,1}$
in the original network $N$. The range of the input neuron $n_{1,1}$
is $[20,21]$. The range of the possible values of $n_{2,1}$ is
$[20,21]\cdot10=[200,210]$, since its only input edge comes from
$n_{1,1}$ and its weight is 10. Similarly, the possible values of $n_{2,2}$
are $[20,21]$ since its only input edge comes from $n_{1,1}$ and
its weight is 1. Moving to the output layer, the range of possible
values of $n_{3,1}$ is calculated using the weighted sum of the ranges
of $n_{2,1}$ and $n_{2,2}$: the lower bound is
$200\cdot3+20\cdot4=680$, and the upper bound is
$210\cdot3+21\cdot4=714$.  Similarly, by propagating the input range
to the output range in $N'$ we get that the output range of $n_{3,1}$
in $N'$ is $[1400,1470]$.
Therefore, the minimal difference is bounded by
$d=N'(20)-N(21)=1400-714=686$. Now, $d$ can be used in updating the
verification query into the over-approximate verification query
$\langle N', n_{1,1}\in[20,21],n_{3,1}<800+686=1486\rangle$, which is
exactly $v_2$ above.

\section{Implementation and Evaluation}
\label{sec:evaluation}

\sloppy \mysubsection{Implementation.} For evaluation purposes, we
created a proof-of-concept implementation of \cegartwo{}, referred to
as \cegarnntwo{}, where the \queryAbstractOp{} and \queryRefineOp{}
operations are implemented according to
Algorithms~\ref{alg:cegarnntwo-query-abstract-op}
and~\ref{alg:cegarnntwo-query-refine-op}, respectively. The neuron
merging and splitting operations, \abstractOp{} and \refineOp{}, were
taken from the publicly released code of~\cite{ElCoKa2022}, which
implements the methods proposed in~\cite{ElGoKa2020}.

As our tool's underlying verification engine, we used the Marabou
framework~\cite{KaHuIbJuLaLiShThWuZeDiKoBa19Marabou} (although other
tools could be used, instead). Marabou is a sound and complete
verifier~\cite{KaHuIbJuLaLiShThWuZeDiKoBa19Marabou, WuOzZeIrJuGoFoKaPaBa20}
that runs a Simplex solver at its core~\cite{KaBaDiJuKo17Reluplex,
	KaBaDiJuKo21}, combined with abstract-interpretation
enhancements~\cite{WaPeWhYaJa2018, SiGePuVe2019, ElGoKa2020, OsBaKa22,
	ZeWuBaKa22}, proof-production capabilities~\cite{IsBaZhKa22}, 
advanced splitting heuristics~\cite{WuZeKaBa22}, optimization
techniques~\cite{StWuZeJuKaBaKo21}, and support for various activation
functions~\cite{AmWuBaKa21}. The framework has previously been applied
to perform various tasks, such as DNN repair~\cite{ReKa22,
	GoAdKeKa20}, explainability~\cite{BaKa23}, reinforcement learning
verification~\cite{AmCoYeMaHaFaKa23, AmScKa21, ElKaKaSc21, AmMaZeKaSc23},
DNN simplification~\cite{LaKa21, GoFeMaBaKa20}, and DNN
ensemble selection~\cite{AmZeKaSc22} and industrial needs~\cite{AmFrKaMaRe23}.

For the lower and upper bound computation performed in
Algorithm~\ref{alg:under-approximate-property}, we implemented the
interval bound propagation (IBP) method proposed
in~\cite{GoDvStBuQiUeArMaKo2018}; and also used the symbolic bound
tightening (SBT) method~\cite{WaPeWhYaJa2018} and the DeepPoly
method~\cite{SiGePuVe2019}, both of which were already implemented as
part of the Marabou engine~\cite{KaHuIbJuLaLiShThWuZeDiKoBa19Marabou}.
Our tool is implemented in Python and is publicly available
online, along with all benchmarks used in our evaluation.\footnote{\url{https://github.com/yizhake/cegarette_nn}}
All experiments reported below
were conducted on x86-64 Gnu/Linux-based machines, using a single
Intel(R) Xeon(R) Gold 6130 CPU @ 2.10GHz core.

\mysubsection{Benchmarks.}
We conducted extensive experiments using two sets of benchmarks:
ACAS-Xu~\cite{JuLoBrOwKo2016} and MNIST~\cite{Le98}.

ACAS-Xu is a set of 45 DNNs intended to operate as an airborne
collision avoidance system. Each of these networks receives sensor
information regarding the aircraft's trajectory and velocity, as well
as those of other aircraft nearby; and produces a horizontal turning
advisory, intended to reduce the chance of airborne collision. Each of
these networks consists of an input layer with 5 neurons, followed by
6 hidden layers with 50 neurons each, and a final output layer with 5
additional neurons --- yielding a total of 310 neurons.

For specifications, we used adversarial robustness queries, which are
the de facto standard for DNN verification~\cite{BaLiJo21}. Each such
query specifies an input point $x_0$ and a radius $\delta$ and
states that any point within the $\delta$-ball around $x_0$ must
produce the same classification as $x_0$. Here, we used 20
previously proposed adversarial robustness properties~\cite{ElGoKa2020}, each with a 2-hour timeout.

For the second family of benchmarks, we used the MNIST dataset of
grayscale images of hand-written digits between 0 and 9. We used 60000
images to train 3 different networks, whose topologies are listed in
Table~\ref{table:mnist-trained-nets}. These networks achieved high
accuracy rates (although not as high as the state of the art~\cite{Ma21}) on 10000 test images, as detailed in the ``Accuracy'' row
in Table \ref{table:mnist-trained-nets}. For these networks, we again
used adversarial robustness queries.  Specifically, we selected
30 input points that were
sampled uniformly at random. For $\delta$, we used the values
$ \delta \in \{1e^{-3}, 1e^{-2}, 2e^{-2}, 5e^{-2}, 7e^{-2}, 9e^{-2},
1e^{-1}\}$.  For each candidate query, i.e.,~a pair of input point
$x_0$ and a radius
$\delta$, we sampled 10000 random inputs in the
$\delta$-ball around
$x_0$, and ensured that they were all correctly classified.
%differs between the 3 networks, since it was specifically tailored to each one.
%$ x_0 $: 30 images, used will all deltas
%slack value - how much difference between winner and runner
%slack percentages = $ [1e-6, 0.01, 0.05, 0.1, 0.5] $
If any random sample was misclassified, we discarded the query without
verification  (this was done in order to filter out very simple
queries, where formal verification is not needed).  After this
filtering, the total number of queries remaining for each of the
networks was 653, 797, and 560. Because these networks were more
complex than the ACAS Xu case, we set an arbitrary timeout value of 3 days for
each query.  The actual encoding as a Marabou query was performed
according to the definition of \emph{standard robustness} (Definition
2,~\cite{CaKoDaKoKaAmRe22}).

\begin{table}
	\setlength\tabcolsep{0pt}
	\caption{Network sizes and properties.}
	\label{table:mnist-trained-nets}
	\begin{tabular*}{\linewidth}{@{\extracolsep{\fill}} l *{4}{S[table-format=2.1]} }
		\toprule   
		
		& \textbf{Acas Xu} %(45 DNNs) 
		& \textbf{MNIST 1}  
		& \textbf{MNIST 2}
		& \textbf{MNIST 3} \\
		\midrule
		Inputs & 5 & 784 & 784 & 784 \\
		Outputs & 5 & 10 & 10 & 10 \\
		Hidden layers & 6 & 5 & 6 & 15 \\
		Total hidden neurons & 300 & 144 & 320 & 224 \\
		%Number of experiments & 900 & 653 & 797 & 560 \\
		Accuracy & & 96.94 & 97.13 & 95.43 \\
		\bottomrule
	\end{tabular*}
\end{table}

\mysubsection{Evaluation.}  Recall that our approach depends on our
ability to compute tight lower and upper output bounds as part of
Algorithm~\ref{alg:under-approximate-property}. Multiple methods have
been proposed for computing such bounds, with varying degrees of
accuracy --- and with the more accurate ones typically taking longer
to run. Thus, in our first experiment, we set out to compare the
different bound propagation approaches, namely \IBP{}, \SBT{}, and
\DeepPoly{}, and measure their usefulness as part of our framework.

Table~\ref{table:compare-bound-propagation-methods} depicts the
results obtained using the three approaches when applied to the
ACAS-Xu benchmarks. \IBP{}, which is the most lightweight but also the
least precise among the approaches, performed the worst --- leading to
the highest number of timeouts. This indicates that the bounds it
computed were fairly loose, triggering large sequences of spurious
examples and refinement steps, eventually leading to the
timeouts. \DeepPoly{}, which is the most precise among the three but
also the most computationally expensive, achieved better bounds, and
consequently fewer timeouts. \SBT{}, which is not as precise as
\DeepPoly{} but which is quicker to run, obtained the best
results. Thus, we selected \SBT{} as the best configuration for our
tool and used it in the remaining experiments.

\begin{table}
	\caption{Comparing Bound propagation methods on the ACAS-Xu benchmarks.}
	\label{table:compare-bound-propagation-methods}
	\begin{center}
		\begin{tabular}{ 
				| >{\centering\arraybackslash}X m{5em} 
				| >{\centering\arraybackslash}X m{5em} 
				>{\centering\arraybackslash}X m{5em} 
				>{\centering\arraybackslash}X m{5em} 
				|
			} 
			\hline
			& \IBP{} & \SBT{} & \DeepPoly{} \\ \hline
			\#Finished & 397 & \textbf{851} & 833 \\ 
			\#Timeouts & 463 & \textbf{49} & 67 \\ 
			\hline
		\end{tabular}
	\end{center}
\end{table}

In our second experiment, we set out to measure the overall improvement
afforded by our approach. Since it has already been established that
abstraction-refinement often improves over direct
verification~\cite{ElCoKa2022, ElGoKa2020, AsHaKrMu20, PrAf2020}, and
because our approach extends the \cegarnn{} approach~\cite{ElGoKa2020}, we used \cegarnn{} as our baseline.

The results of comparing the two tools on all benchmark sets are
displayed in Table~\ref{fig:exp-2-results}. Most importantly, the
results in the \emph{Timeouts} and \emph{Finished} columns confirm that
\cegarnntwo{} significantly improves over \cegarnn{} in terms of
finished experiments: a total of 2674 for \cegarnntwo{}, versus 1618
for \cegarnn{} --- a 65\% improvement. Because both tools use the same
underlying verifier, and apply the same basic abstraction and
refinement operators, this improvement stems directly from the
property abstraction mechanism, and its ability to reduce the number
of spurious counter-examples.

The next column, \emph{Faster Verification Time}, counts the number of instances in which one tool outperformed the other. It shows that \cegarnntwo{} achieves an
improvement of 12.85\% over \cegarnn. Finally, the
\emph{Fewer Refinement Steps} column counts the number of experiments
in which the total number of refinement steps was
smaller, and shows that \cegarnntwo{} achieves an improvement of
586.25\%.
We note that all comparison metrics that we used (except
\emph{Timeouts}) consider only those benchmarks that both tools
successfully solved. We also mention that, as with all SMT solvers,
slight changes to the input can result in widely different search
paths and runtimes, as demonstrated, e.g.,  in the case of the MNIST-3 network.

The graphs in Figure \ref{fig:cegarrete_vs_cegar_over_time} show how many experiments were completed by each method at different time points. ACAS-Xu/MNIST (MNIST-1, MNIST-2 and MNIST-3 together) results are represented in the left/right graph respectively, and indicate that \cegarnntwo{} is close in performance to \cegarnn{} at short time constants, while achieving a significant advantage when the threshold time is extended.

\begin{table}
	\caption{Comparing \cegarnntwo{} and \cegarnn{}.}
	\label{fig:exp-2-results}
	\resizebox{\columnwidth}{!}{%
		\begin{tabular}{
				>{\centering\arraybackslash}X m{5cm}
				>{\centering\arraybackslash}X m{1.8cm}
				>{\centering\arraybackslash}X m{1.8cm}
				>{\centering\arraybackslash}X m{2.2cm}
				>{\centering\arraybackslash}X m{1.7cm}
			}
			\toprule
			& \textbf{Timeout} & \textbf{Finished} & \makecell{\textbf{Faster} \\ \textbf{Verification}\\ \textbf{Time}} & \makecell{\textbf{Fewer} \\ \textbf{Refinement}\\ \textbf{Steps}} \\
			\midrule
			\begin{tabular}{m{2cm}m{2cm}} \textbf{ACAS-Xu} & 
				\begin{tabular}{c}\cegarnn{} \\ \cegarnntwo{}\end{tabular} 
			\end{tabular} & 
			\begin{tabular}{c}501 \\ \textbf{49}\end{tabular} & 
			\begin{tabular}{c} 397 \\ \textbf{851}\end{tabular} & 
			\begin{tabular}{c}\textbf{272} \\ 115\end{tabular} & 
			\begin{tabular}{c}\textbf{73} \\ 2\end{tabular} \\
			\midrule
			\begin{tabular}{m{2cm}m{2cm}} \textbf{MNIST-1} & 
				\begin{tabular}{c}\cegarnn{} \\ \cegarnntwo{}\end{tabular} 
			\end{tabular} & 
			\begin{tabular}{c}268 \\ \textbf{88}\end{tabular} & 
			\begin{tabular}{c}396 \\ \textbf{576}\end{tabular} & 
			\begin{tabular}{c}127 \\ \textbf{252}\end{tabular} & 
			\begin{tabular}{c}1 \\ \textbf{283}\end{tabular} \\
			\midrule
			\begin{tabular}{m{2cm}m{2cm}} \textbf{MNIST-2} & 
				\begin{tabular}{c}\cegarnn{} \\ \cegarnntwo{}\end{tabular} 
			\end{tabular} & 
			\begin{tabular}{c} 716\\ \textbf{228}\end{tabular} & 
			\begin{tabular}{c} 230 \\ \textbf{718}\end{tabular} & 
			\begin{tabular}{c}83 \\ \textbf{134}\end{tabular} & 
			\begin{tabular}{c}0 \\ \textbf{133}\end{tabular} \\
			\midrule
			\begin{tabular}{m{2cm}m{2cm}} \textbf{MNIST-3} & 
				\begin{tabular}{c}\cegarnn{} \\ \cegarnntwo{}\end{tabular} 
			\end{tabular} & 
			\begin{tabular}{c}\textbf{101} \\ 167\end{tabular} & 
			\begin{tabular}{c} \textbf{595} \\ 529\end{tabular} & 
			\begin{tabular}{c}195 \\ \textbf{263}\end{tabular} & 
			\begin{tabular}{c}6 \\ \textbf{51}\end{tabular} \\
			\midrule
			\begin{tabular}{m{2cm}m{2cm}} \textbf{TOTAL} & 
				\begin{tabular}{c}\cegarnn{} \\ \cegarnntwo{}\end{tabular} 
			\end{tabular} & 
			\begin{tabular}{c}1506 \\ \textbf{532}\end{tabular} & 
			\begin{tabular}{c} 1618 \\ \textbf{2674}\end{tabular} & 
			\begin{tabular}{c}677 \\ \textbf{764}\end{tabular} & 
			\begin{tabular}{c}80 \\ \textbf{469}\end{tabular} \\
			\midrule
		\end{tabular}%
		}
	\end{table}

    \begin{figure}[!ht]
		\centering
		\scalebox{0.5}{
  		\begin{tabular}{cc}
			\includegraphics[valign=T]{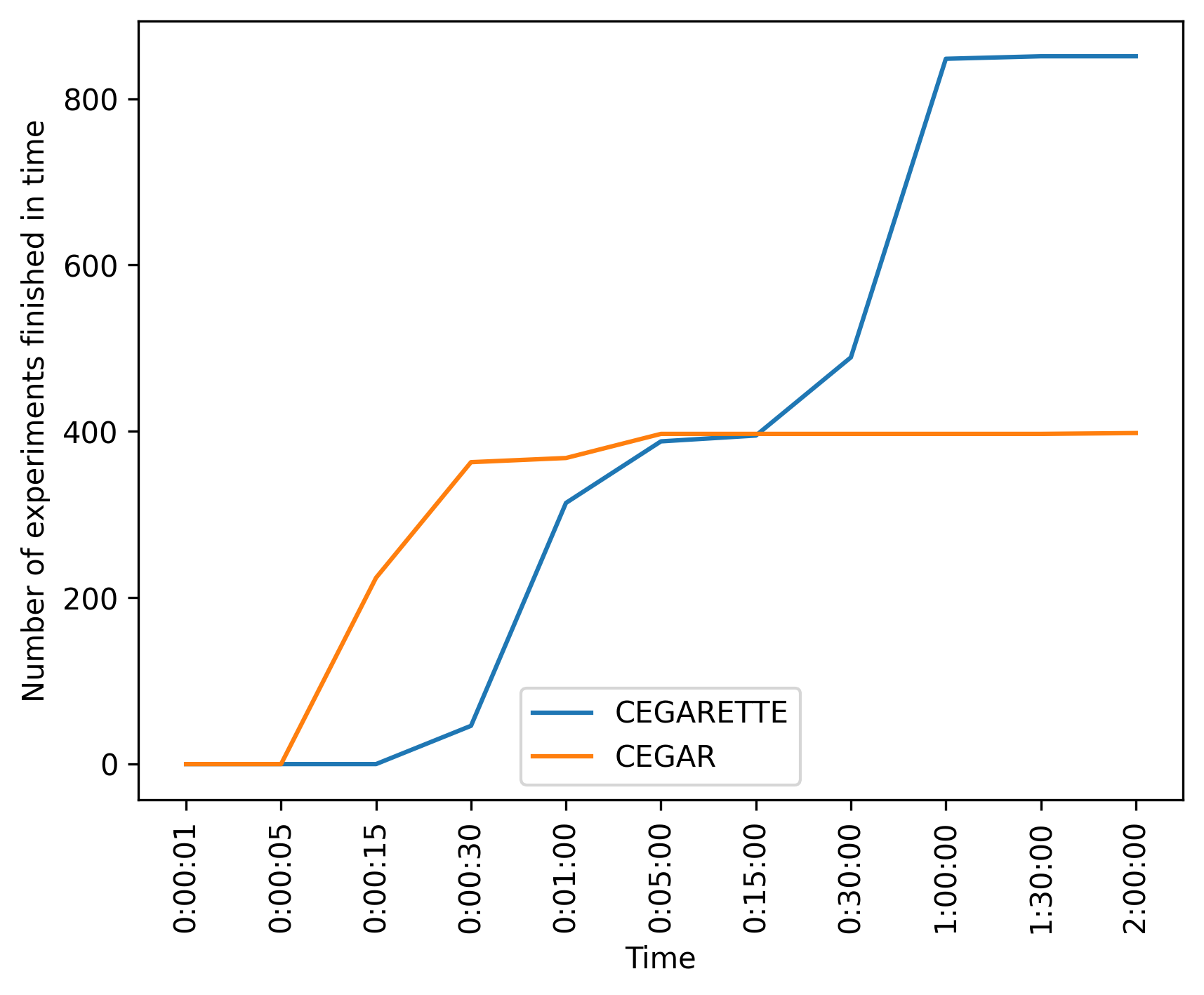} &  
			\includegraphics[valign=T]{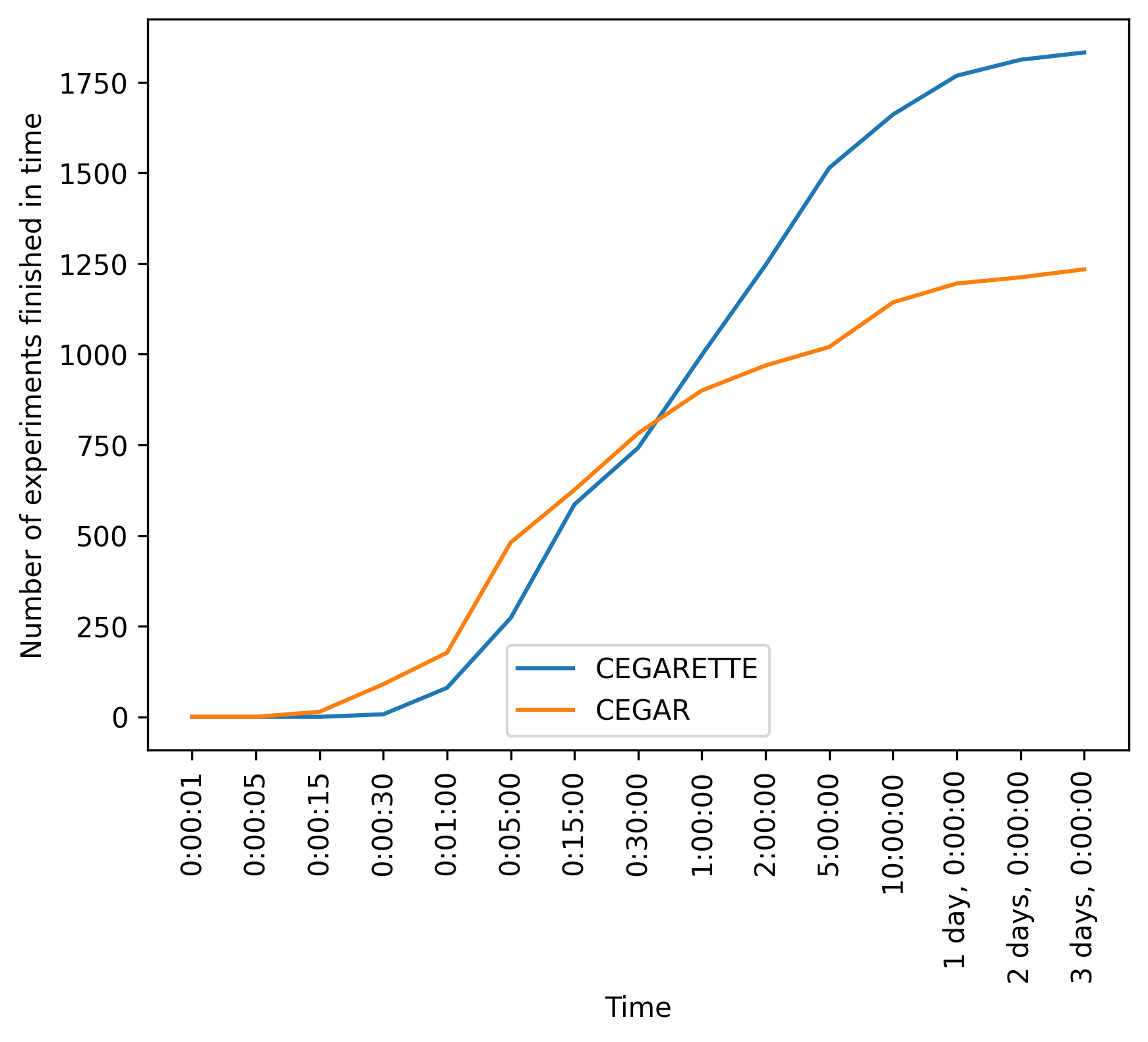}  \\
		\end{tabular}			
		}
		\caption{Comparing the numbers of finished experiments over time between \cegarnntwo{} and \cegarnn{} on ACAS-Xu (left) and MNIST (right) datasets.}
		\label{fig:cegarrete_vs_cegar_over_time}
	\end{figure}

\section{Related Work}
\label{sec:relatedWork}

There are two main approaches to validating the robustness of deep
neural networks.  The first combines methods of dynamic analysis
and heuristic search to find violations and test the system (e.g.~\cite{TiPeJaRa2017,PeCaYaJa2019,ReZhQiLi2020,CaWa2016}). The
second, which we follow in this work, is that of formal
verification, where correctness is established using a rigorous,
automated procedure~\cite{LiArLaBaKo19}.

Several DNN verification techniques have been studied in recent years,
based on multiple approaches: SMT-based techniques, such as
Marabou~\cite{KaHuIbJuLaLiShThWuZeDiKoBa19Marabou},
Reluplex~\cite{KaBaDiJuKo17Reluplex}, and
others~\cite{KaBaKaSc19,HuKwWaWu17}; techniques based on mixed integer
programming, including Planet~\cite{Ehlers2017},
MIPVerify~\cite{TjXiTe19} and
others~\cite{DuChJhSaTi2019,BuTuToKoKu17,DuJhSaTi18,BuLuTuToKoKu2019}; symbolic
interval propagation techniques~\cite{WaPeWhYaJa2018}; abstract
interpretation techniques~\cite{GeMiDrTsChVe18}; and many others
(e.g.,~\cite{AnPaDiCh19, JaBaKa20, KuKaGoJuBaKo18, LoMa17,
	NaKaRySaWa17, WuOzZeIrJuGoFoKaPaBa20, XiTrJo18}).  Our approach can
be used to extend various abstraction and refinement mechanisms, and be
integrated with many sound and complete DNN verifiers as backends.
Incomplete verification techniques could also be used as part of our
approach, potentially improving performance but at the cost of
incompleteness; we leave this for future work.

There have been multiple attempts to utilize abstract
interpretation~\cite{CoCo1977} to decrease the complexity of DNN
verification, and expedite the verification
process~\cite{GeMiDrTsChVe18, SiGePuVe2019,
	YaLiLiHuWaSuXuZh2020}. These methods capture the behavior of
propagated values in the network using abstract domains such as 
boxes, zonotopes, or polyhedra. Often, these  methods 
rely on coarse over-approximations, and are incomplete; although
various refinement methods, as well as integration with complete
verifiers, have been proposed to mitigate this.

Apart from the DNN abstraction technique
that we leveraged here~\cite{ElGoKa2020}, other incomplete
abstraction techniques have been proposed~\cite{AsHaKrMu20}.  These
manipulate the neurons and the edges of the network, using semantic
similarity; utilize clustering methods in order to merge similar
neurons; or merge neurons and compute ranges of weights and biases
that the merged neurons can take. Integrating our framework with these
additional approaches should be possible, and is left for future work.

A few recent papers proposed to instrument CEGAR-based approaches for DNN verification~\cite{ElGoKa2020,LiXiShSoXuMi2022}.
The network is preprocessed such that, later on, multiple neurons can
be merged while causing a strict increase in the output values, hence
over-approximating the original network. Refinement is done by
splitting past-merged neurons.
Both are oblivious to the underlying verification engine as long as it is sound and complete. 

The work in~\cite{ElCoKa2022} uses residual reasoning to optimize CEGAR-based approaches, given that the backend applies case splitting.
Another independent optimization for the CEGAR-based approach for
neural network verification was recently proposed
in~\cite{ZhYeGuFuTaJi2022}, where testing methods are embedded during
the formal verification process in order to quickly expose violations
using adversarial attacks.

As far as we know, our work is the first formal verification scheme
which extends the basic mechanism of CEGAR~\cite{ClGrJhLuVe10CEGAR} by
applying abstraction and refinement not only to the checked system but
also to the (output) property. Therefore, it should be compatible with any of
the aforementioned techniques.

\section{Conclusion}
\label{sec:conclusion}
Neural networks are gaining momentum in many areas, but using them
entails various risks. Neural network verification tools seek to help
overcome this issue, but are computationally expensive --- and afford
only limited scalability. Abstraction-based approaches hold great
potential for expediting the verification process and allowing
verifiers to scale to much larger networks. Here, we took a step in
this direction, by applying abstraction and refinement not only to
neural networks themselves but also to the properties being checked
as part of the verification query. We demonstrated that our method
dramatically improves performance, by reducing the number of spurious
counter-examples encountered. The result is a significant boost to the
scalability of existing verification technology.

Moving forward, we plan to pursue several research directions.  First,
we plan to explore additional techniques for property tightening given
an abstract neural network. Second, we intend to develop novel
abstraction and refinement heuristics, which are optimized for
\cegarnntwo{} --- i.e., which will allow better property
tightening. Third, we plan to integrate our method with additional,
recently-proposed CEGAR-based approaches~\cite{AsHaKrMu20, PrAf2020}.

\medskip
\noindent
\textbf{Acknowledgements.}  We thank Orna Kupferman and Orna Grumberg
for their insightful comments.  This project was partially supported
by grants from the National Science Foundation (2211505), the
Binational Science Foundation (2021769), and the Israel Science
Foundation (683/18).

\newpage
\bibliographystyle{abbrv}
\bibliography{property_abstraction}

\end{document}